\documentclass[journal]{IEEEtran}
\ifCLASSINFOpdf
\else
\fi
\hyphenation{op-tical net-works semi-conduc-tor}
\usepackage{amsmath}
\usepackage{graphicx}
\usepackage{amssymb}
\usepackage{colortbl}
\usepackage{arydshln}
\usepackage{stfloats}
\usepackage{algorithm}
\usepackage{algorithmicx}
\usepackage{algpseudocode}
\usepackage{multirow}
\usepackage{extarrows}
\usepackage{mathrsfs}
\usepackage[sort]{cite}

\newtheorem{lemma}{Lemma}
\newtheorem{proposition}{Proposition}
\newtheorem{definition}{Definition}
\newtheorem{remark}{Remark}
\newtheorem{assumption}{Assumption}
\newtheorem{problem}{Problem}

\newtheorem{example}{Example}

\begin{document}
	%
	\title{Output Reachable Set Estimation and Verification for Multi-Layer Neural Networks}
	%
	%
	%
	
	\author{Weiming~Xiang, \emph{Senior Member, IEEE}, Hoang-Dung Tran  \emph{Member, IEEE}, and Taylor T. Johnson  \emph{Member, IEEE}
		\thanks{The material presented in this paper is based upon work supported by
			the National Science Foundation (NSF) under grant numbers CNS 1464311
			and 1713253, and SHF 1527398 and 1736323, and the Air Force Office of
			Scientific Research (AFOSR) under contract numbers FA9550-15-1-0258,
			FA9550-16-1-0246, and FA9550-18-1-0122. The U.S. government is
			authorized to reproduce and distribute reprints for Governmental
			purposes notwithstanding any copyright notation thereon. Any opinions,
			findings, and conclusions or recommendations expressed in this
			publication are those of the authors and do not necessarily reflect
			the views of AFOSR or NSF. 
			
			Authors are with the Department of Electrical Engineering and Computer Science, Vanderbilt University, Nashville, TN 37212 USA. Email: Weiming Xiang (xiangwming@gmail.com), Hoang-Dung Tran (trhoangdung@gmail.com), Taylor T. Johnson (taylor.johnson@gmail.com).}
	}
	
	%
	%

	\markboth{IEEE TRANSACTIONS ON NEURAL NETWORKS AND LEARNING SYSTEMS, VOL. XX, NO. XX, XX XXXX}%
	{}
	%



	\maketitle

	\begin{abstract}
		\boldmath
		In this paper, the output reachable estimation and safety verification problems for multi-layer perceptron neural networks are addressed. First, a conception called maximum sensitivity in introduced and, for a class of  multi-layer perceptrons whose activation functions are monotonic functions, the maximum sensitivity can be computed via solving  convex optimization problems. Then, using a simulation-based method, the output reachable set estimation problem for neural networks is formulated into a chain of optimization problems. Finally, an automated safety verification is developed based on the output reachable set estimation result. An application to the safety verification for a robotic arm model with two joints is presented to show the effectiveness of proposed approaches.
		
	\end{abstract}

	\begin{IEEEkeywords}\boldmath
		Multi-layer perceptron, reachable set estimation, simulation, verification.  
	\end{IEEEkeywords}

	%
	\IEEEpeerreviewmaketitle
	
	\section{Introduction}
	Artificial neural networks have been widely used in machine learning systems. Applications include adaptive control \cite{hunt1992neural,ge1999adaptive,wang2016combined,wu2014exponential}, pattern recognition \cite{schmidhuber2015deep,lawrence1997face}, game playing \cite{silver2016mastering}, autonomous vehicles \cite{bojarski2016end}, and many others. Though neural networks have been showing the effectiveness and powerful ability in dealing with complex problems, they are confined to systems which comply only to the lowest safety integrity levels since, in most of the time, a neural network is viewed as a \emph{black box} without effective methods to assure safety specifications for its outputs. Verifying neural networks is a hard problem, even simple properties about them have been proven NP-complete
	problems \cite{katz2017reluplex}. The difficulties mainly come from the presence of activation functions and the complex structures, making neural networks large-scale, nonlinear, non-convex and thus incomprehensible to humans. Until now, only few results have been reported for verifying neural networks. The verification for feed-forward multi-layer neural networks is investigated based on \emph{Satisfiability Modulo Theory} (SMT) in \cite{huang2017safety,pulina2012challenging}. In \cite{pulina2010abstraction} an Abstraction-Refinement approach is proposed. In \cite{katz2017reluplex,xiang2017reachable_arxiv}, a specific kind of activation functions called \emph{Rectified Linear Unit} is considered for verification of neural networks. Additionally, some recent reachable set estimation results are reported for neural networks \cite{xu2017reachable,zuo2014non,thuan2016reachable}, these results that are based on Lyapunov functions analogous to stability \cite {xiang2017robust,xiang2018parameter,xiang2017stability,xiang2016necessary} and reachability analysis of dynamical systems \cite{xiang2017output,xiang2017reachable}, have potentials to be further extended to safety verification.

	In this work, we shall focus on a class of neural networks called \emph{Multi-Layer Perceptron} (MLP).  Due to the complex structure, manual reasoning for an MLP is impossible. Inspired by some simulation-based ideas for verification \cite{duggirala2015c2e2,fan2016automatic,bak2017hylaa}, the information collected from a finitely many simulations will be exploited to estimate the output reachable set of an MLP and, furthermore, to do safety verification. To bridge the gap between the finitely many simulations and the output set generated from a bounded input set which  essentially  includes infinite number of inputs, a conception called maximum sensitivity is introduced to characterize the maximum deviation of the output subject to a bounded disturbance around a nominal input. By formulating a chain of optimizations, the maximum sensitivity for an MLP can be computed in a layer-by-layer manner. Then,  an exhaustive search of the input set is enabled by a discretization of input space to achieve an estimation of output reachable set which consists of a union of reachtubes. Finally, by the merit of reachable set estimation, the safety verification for an MLP can be done via checking the existence of intersections between the estimated reachable set and unsafe regions. The main benefits of our approach are that there are very few restrictions on the activation functions except for the monotonicity which is satisfied by a variety of activation functions, and also no requirement on the bounded input sets. All these advantages are coming from the simulation-based nature of our approach. 
	
	The remainder of this paper is organized as follows. Preliminaries and problem formulation are given in Section II. The maximum sensitivity analysis for an MLP is studied in Section III. Output reachable set estimation and safety verification results are given in Section IV. An application to robotic arms is provided in Section V and Conclusions are presented in Section VI.

	\section{Preliminaries and Problem Formulation} 
	\subsection{Multi-Layer Neural Networks}
	A neural network consists of a number of interconnected neurons. Each neuron is a simple processing element that responds to the weighted inputs it received from other neurons. In this paper, we consider the most popular and general feedforward neural networks called the Multi-Layer Perceptron (MLP). Generally, an MLP consists of three typical classes of layers: An input layer, that
	serves to pass the input vector to the network, hidden layers of computation neurons, and
	an output layer composed of at least a computation neuron to produce the output vector.
	The action of a neuron depends on its activation function, which is described as 
	\begin{align}
	y_i = f\left(\sum\nolimits_{j=1}^{n}\omega_{ij} x_j + \theta_i\right)
	\end{align}
	where $x_j$ is the $j$th input of the $i$th neuron, $\omega_{ij}$ is the weight from the $j$th input to the $i$th neuron, $\theta_i$ is called the bias of the $i$th neuron, $y_i$ is the output of the $i$th neuron, $f(\cdot)$ is the activation function. The activation function is a nonlinear function  describing the reaction of $i$th neuron with inputs $x_j(t)$, $j=1,\cdots,n$. Typical activation functions include rectified linear unit, logistic, tanh, exponential linear unit, linear functions, for instance. In this work, our approach aims at dealing with the most of activation functions regardless of their specific forms, only the following monotonic assumption needs to be satisfied.
	
	\begin{assumption}\label{assumption_1}
		For any $x_1 \le x_2$, the activation function satisfies $f(x_1) \le f(x_2)$. 
	\end{assumption}
	\begin{remark}
		Assumption \ref{assumption_1} is a common property that can be satisfied by a variety of activation functions. For example, it is easy to verify that the most commonly used logistic function $f(x)=1/(1+e^{-x})$
		satisfies	Assumption \ref{assumption_1}.
	\end{remark}
	
	An MLP has multiple layers,  each layer $\ell$, $1 \le \ell \le L $, has $n^{[\ell]}$ neurons.  In particular, layer $\ell =0$ is used to denote the input layer and $n^{[0]}$ stands for the number of inputs in the rest of this paper, and $n^{[L]}$ stands for the last layer, that is the output layer. For a neuron $i$, $1 \le i \le n^{[\ell]}$ in layer $\ell$, the corresponding input vector is denoted by $\mathbf{x}^{[\ell]}$ and the weight matrix is 
	$
	\mathbf{W}^{[\ell]} = [\boldsymbol{\omega}_{1}^{[\ell]},\ldots,\boldsymbol{\omega}_{n^{[\ell]}}^{[\ell]}]^{\top}
	$,
	where $\boldsymbol{\omega}_{i}^{[\ell]}$ is the weight vector. The bias vector for layer $\ell$ is
	$ \boldsymbol{\theta}^{[\ell]}=[\theta_1^{[\ell]},\ldots,\theta_{n^{[\ell]}}^{[\ell]}]^{\top}
	$. 
	
	The output vector of layer $\ell$ can be expressed as 
	\begin{equation*}
	\mathbf{y}^{[\ell]}=f_{\ell}(\mathbf{W}^{[\ell]}\mathbf{x}^{[\ell]}+\boldsymbol{\theta}^{[\ell]})
	\end{equation*} 
	where $f_{\ell}(\cdot)$ is the activation function for layer $\ell$.
	
	For an MLP, the output of $\ell-1$ layer is the input of $\ell$ layer. The mapping from the input $\mathbf{x}^{[0]}$ of input layer  to the output  $\mathbf{y}^{[L]}$ of output layer stands for the input-output relation of the MLP, denoted by
	\begin{equation}\label{NN}
	\mathbf{y}^{[L]} = F (\mathbf{x}^{[0]})
	\end{equation}    
	where $F(\cdot) \triangleq f_L  \circ f_{L - 1}  \circ  \cdots  \circ f_1(\cdot) $.
	
	According to the \emph{Universal Approximation Theorem} \cite{hornik1989multilayer}, it guarantees that, in
	principle, such an MLP in the form of (\ref{NN}), namely the function $F(\cdot)$, is able to approximate any nonlinear real-valued function. Despite the impressive ability of approximating  functions,  much complexities represent in predicting the output behaviors of an MLP. In most of real applications, an MLP is usually viewed as a \emph{black box} to generate  a desirable output with respect to a given input. However, regarding  property verifications such as  safety verification, it has been observed that even a well-trained neural network can react in unexpected and incorrect ways to even slight perturbations of their inputs, which could result in unsafe systems. Thus, the output reachable set estimation of an MLP, which is able to cover all possible values of outputs, is necessary for the safety verification of an MLP and draw a safe or unsafe conclusion for an MLP.

	\subsection{Problem Formulation}
	Given an input set $\mathcal{X}$, the output reachable set of neural network (\ref{NN}) is stated by the following definition.
	\begin{definition} \label{reachable_set}
		Given an MLP in the form of (\ref{NN}) and an input set $\mathcal{X}$, the output reachable set of (\ref{NN}) is defined as 
		\begin{equation}
		\mathcal{Y} \triangleq \{\mathbf{y}^{[L]}  \mid \mathbf{y}^{[L]} = F (\mathbf{x}^{[0]}),~\mathbf{x}^{[0]} \in \mathcal{X}\} .
		\end{equation}
	\end{definition}
	
	Since MLPs are often large, nonlinear, and non-convex, it is extremely difficult to compute the exact output reachable set $\mathcal{Y}$ for an MLP. Rather than directly computing the exact output reachable set for an MLP, a more practical and feasible way is to derive an over-approximation of $\mathcal{Y}$, which is called output reachable set estimation.
	
	\begin{definition}\label{reachable_estimation}
		A set  $\tilde{\mathcal{Y}} $ is called an output reachable set estimation of MLP (\ref{NN}), if $\mathcal{Y} \subseteq \tilde{\mathcal{Y}}$ holds, where $\mathcal{Y}$ is the output reachable set of MLP (\ref{NN}). 
	\end{definition}
	
	Based on Definition \ref{reachable_estimation}, the problem of output reachable set estimation for an MLP is given as below.
	
	\begin{problem} \label{problem1}
		Given a bounded input set $\mathcal{X}$ and an MLP described by (\ref{NN}), how to find a set $\tilde{\mathcal{Y}} $ such that $\mathcal{Y} \subseteq \tilde{\mathcal{Y}}$, and make the estimation set $\tilde{\mathcal{Y}}$ as small as possible\footnote{	For a set $\mathcal{Y}$, its over-approximation $\tilde{\mathcal{Y}}_1$ is smaller than another over-approximation $\tilde{\mathcal{Y}}_2$ if 
			$
			d_H(\tilde{\mathcal{Y}}_1,\mathcal{Y}) < d_H(\tilde{\mathcal{Y}}_2,\mathcal{Y})
			$ holds,
			where $d_H$ stands for the Hausdorff distance.}? 
	\end{problem}
	
	
	In this work, we will focus on the safety verification for neural networks. The safety specification for outputs is expressed by a set defined in the output space, describing the safety requirement. 
	
	\begin{definition}
		Safety specification $\mathcal{S}$ of an MLP
		formalizes the safety requirements for  output $\mathbf{y}^{[L]}$ of MLP $\mathbf{y}^{[L]}=F(\mathbf{x}^{[0]})$, and is a predicate over  output $\mathbf{y}^{[L]}$ of MLP. The MLP is safe if and only if the following condition is satisfied:
		\begin{equation}\label{safety}
		\mathcal{Y} \cap \neg \mathcal{S} = \emptyset
		\end{equation}
		where $\neg$ is the symbol for logical negation.
	\end{definition}
	
	Therefore, the safety verification problem for an MLP is stated as follows.
	\begin{problem}\label{problem2}
		Given a bounded input set $\mathcal{X}$, an MLP in the form of (\ref{NN}) and a safety specification $\mathcal{S}$, how to check if condition (\ref{safety}) is satisfied? 
	\end{problem}
	
	Before ending this section, a lemma is presented to show that the safety verification of an MLP can be relaxed by checking with the over-approximation of the output reachable set. 
	
	\begin{lemma}\label{lemma1}
		Consider an MLP in the form of (\ref{NN}), an output reachable set estimation $\mathcal{Y}\subseteq\tilde{\mathcal{Y}}$ and a safety specification $\mathcal{S}$, the MLP is safe if the following condition is satisfied
		\begin{equation}\label{estimate_safety}
		\tilde{\mathcal{Y}} \cap \neg \mathcal{S} = \emptyset .
		\end{equation}
	\end{lemma}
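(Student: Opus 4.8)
The plan is to prove this by a direct set-inclusion argument, exploiting the monotonicity of intersection with respect to set containment. The core observation is that the safety condition in~(\ref{safety}) is a statement about the emptiness of an intersection, and emptiness of intersections is preserved when we pass to subsets. Since $\tilde{\mathcal{Y}}$ is an over-approximation, the true reachable set $\mathcal{Y}$ lies inside it, so any region that $\mathcal{Y}$ could possibly touch must already be touched by $\tilde{\mathcal{Y}}$.

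Concretely, I would proceed in three short steps. First, invoke Definition~\ref{reachable_estimation} to record the containment $\mathcal{Y} \subseteq \tilde{\mathcal{Y}}$. Second, apply the elementary fact that intersection is monotone: for any set $\mathcal{A}$, the inclusion $\mathcal{Y} \subseteq \tilde{\mathcal{Y}}$ implies $\mathcal{Y} \cap \mathcal{A} \subseteq \tilde{\mathcal{Y}} \cap \mathcal{A}$. Taking $\mathcal{A} = \neg \mathcal{S}$ yields
\begin{equation*}
\mathcal{Y} \cap \neg \mathcal{S} \subseteq \tilde{\mathcal{Y}} \cap \neg \mathcal{S}.
\end{equation*}
Third, combine this with the hypothesis~(\ref{estimate_safety}), namely $\tilde{\mathcal{Y}} \cap \neg \mathcal{S} = \emptyset$. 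A set contained in the empty set is itself empty, so $\mathcal{Y} \cap \neg \mathcal{S} = \emptyset$, which is exactly the safety condition~(\ref{safety}). Hence the MLP is safe.

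Candidly, there is no genuine obstacle in this argument; it is a purely set-theoretic consequence of the over-approximation property, and its entire content is the monotonicity of intersection under inclusion. The lemma is best understood as a soundness guarantee: it formally licenses the verification strategy pursued in the rest of the paper, whereby one replaces the intractable exact set $\mathcal{Y}$ with the computable estimate $\tilde{\mathcal{Y}}$ and checks disjointness against the unsafe region $\neg \mathcal{S}$. It is worth emphasizing that the converse fails in general, since $\tilde{\mathcal{Y}} \cap \neg \mathcal{S} \neq \emptyset$ does not imply the MLP is unsafe; the over-approximation may spuriously intersect the unsafe set. This one-sided nature is precisely why Problem~\ref{problem1} asks for the estimate $\tilde{\mathcal{Y}}$ to be made as tight as possible, so as to minimize the chance of such a false alarm.
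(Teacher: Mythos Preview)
Your proof is correct and follows essentially the same approach as the paper's own proof, which simply notes that $\mathcal{Y}\subseteq\tilde{\mathcal{Y}}$ together with~(\ref{estimate_safety}) directly yields $\mathcal{Y}\cap\neg\mathcal{S}=\emptyset$. Your version is more explicit about the monotonicity of intersection, but the underlying argument is identical.
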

	\begin{proof}
		Since $\mathcal{Y}\subseteq\tilde{\mathcal{Y}}$, (\ref{estimate_safety}) directly leads to $\mathcal{Y} \cap \neg \mathcal{S} = \emptyset$. The proof is complete.
	\end{proof}
	
	Lemma \ref{lemma1} implies that it is sufficient to use the estimated output reachable set for the safety verification of an MLP, thus the solution of Problem \ref{problem1} is also the key to solve Problem \ref{problem2}. 
	
	\section{Maximum Sensitivity for Neural Networks}
	Due to the complex structure and nonlinearities in activation functions, estimating the output reachable sets of MLPs represents much difficulties if only using analytical methods. One possible way to circumvent those difficulties is to employ the information produced by a finite number of simulations. As well known, the finitely many simulations generated from input set $\mathcal{X}$ are incomplete to characterize output set $\mathcal{Y}$, a conception called maximum sensitivity is introduced to bridge the gap between simulations and output reachable set estimations of MLPs.   
	
	\begin{definition} \label{def4}
		Given an MLP $\mathbf{y}^{[L]} = F (\mathbf{x}^{[0]})$, an input $\mathbf{x}^{[0]}$ and disturbances $\Delta\mathbf{x}^{[0]}$ satisfying $\left\|\Delta\mathbf{x}^{[0]} \right\|\le\delta$, the maximum sensitivity of the MLP with input error $\delta$ at $\mathbf{x}^{[0]}$ is defined by 
		\begin{align}
		\epsilon_F(\mathbf{x}^{[0]},\delta)\triangleq\inf\{\epsilon:~\left\|\Delta\mathbf{y}^{[L]} \right\|\le\epsilon,&  \nonumber
		\\
		\mathrm{where}~\mathbf{y}^{[L]} = F (\mathbf{x}^{[0]})
		~&\mathrm{and}~\left\|\Delta\mathbf{x}^{[0]} \right\|\le\delta\} \label{sensitivity}
		\end{align}
	\end{definition}
	
	\begin{remark}
		In some previous articles as \cite{zeng2001sensitivity,zeng2003quantified}, the sensitivity for  neural networks is defined as the mathematical expectation of output deviations due to input and weight deviations with respect to overall input and weight values in a given continuous interval. The sensitivity in the average point of view works well for learning algorithm improvement \cite{xi2013architecture}, weight selection \cite{piche1995selection}, architecture construction \cite{shi2005sensitivity}, for instance. However, it cannot be used for safety verification due to the concern of soundness. In this paper, the maximum sensitivity is introduced to measure the maximum deviation of outputs, which is caused by the bounded disturbances around the nominal input $\mathbf{x}^{[0]}$.
	\end{remark}
	
	Due to the multiple layer structure, we are going to develop a layer-by-layer method to compute the maximum sensitivity defined by (\ref{sensitivity}).
	First, we consider a single layer $\ell$. According to Definition \ref{def4}, the maximum sensitivity for layer $\ell$, which is denoted by $\epsilon(\mathbf{x}^{[\ell]},\delta^{[\ell]})$ at $\mathbf{x}^{[\ell]}$, can be computed by	
	\begin{align}
	&\max~\epsilon(\mathbf{x}^{[\ell]},\delta^{[\ell]}) \nonumber
	\\
	\mathrm{s.t.}~& 
	\epsilon(\mathbf{x}^{[\ell]},\delta^{[\ell]})=\left\|f_{\ell}(\mathbf{W}^{[\ell]}(\mathbf{x}^{[\ell]}+\Delta \mathbf{x}^{[\ell]})+\boldsymbol{\theta}^{[\ell]}) - \mathbf{y}^{[\ell]} \right\| \nonumber
	\\
	& \mathbf{y}^{[\ell]}=f_{\ell}(\mathbf{W}^{[\ell]}\mathbf{x}^{[\ell]}+\boldsymbol{\theta}^{[\ell]}) 
	\nonumber
	\\
	&  \left\|\Delta\mathbf{x}^{[\ell]} \right\|\le\delta^{[\ell]} . \label{opt_1}
	\end{align}
	
	In the rest of paper, the norm $\left\| \cdot \right\|$ is considered the infinity norm, that is $\left\|\cdot\right\|_{\infty}$. By the definition of $\left\|\cdot\right\|_{\infty}$ and monotonicity assumption in Assumption \ref{assumption_1}, the optimal solution $\Delta\mathbf{x}^{[\ell]}_{\mathrm{opt}}$ of (\ref{opt_1}) can be found by running the following set of optimization problems.
	
	To find the optimal solution of (\ref{opt_1}) for layer $\ell$, we start from the neuron $i$ in layer $\ell$, the following two convex optimizations can be set up
	\begin{align}
	&\max~\beta_{i}^{[\ell]} \nonumber
	\\
	\mathrm{s.t.}~~& 
	\beta_i^{[\ell]}=(\boldsymbol{\omega}_i^{[\ell]})^{\top}(\mathbf{x}^{[\ell]}+\Delta \mathbf{x}^{[\ell]})+\theta^{[\ell]} \nonumber
	\\
	&  \left\|\Delta\mathbf{x}^{[\ell]} \right\|_{\infty}\le\delta^{[\ell]} \label{opt_2}
	\end{align}
	and 		
	\begin{align}
	&\min~\beta_{i}^{[\ell]} \nonumber
	\\
	\mathrm{s.t.}~~& 
	\beta_i^{[\ell]}=(\boldsymbol{\omega}_i^{[\ell]})^{\top}(\mathbf{x}^{[\ell]}+\Delta \mathbf{x}^{[\ell]})+\theta^{[\ell]} \nonumber
	\\
	&  \left\|\Delta\mathbf{x}^{[\ell]} \right\|_{\infty}\le\delta^{[\ell]} . \label{opt_3}
	\end{align}
	
	Then, due to the monotonicity, the following optimization problem defined over a finite set consisting of $\beta_{i,\max}^{[\ell]}$ and $\beta_{i,\min}^{[\ell]}$ obtained in (\ref{opt_2}) and (\ref{opt_3}) is formulated to compute the maximum absolute value of output of neuron $i$ in layer $\ell$ 
	\begin{align}
	&\max~\gamma_{i}^{[\ell]} \nonumber
	\\
	\mathrm{s.t.}~~& \gamma_{i}^{[\ell]}=\left|f_{\ell}(\beta_i^{[\ell]}) - f_{\ell}((\boldsymbol{\omega}_i^{[\ell]})^{\top}(\mathbf{x}^{[\ell]})+\theta^{[\ell]})\right| \nonumber
	\\
	&\beta_i^{[\ell]} \in \{\beta_{i,\min}^{[\ell]},~\beta_{i,\max}^{[\ell]} \} . \label{opt_4}
	\end{align}
	
	Finally, based on the maximum  absolute value of the output of neuron $i$ and because of the definition of infinity norm, we are ready to compute the maximum sensitivity of layer $\ell$ by picking out the largest value of $\gamma_i^{[\ell]}$ in layer $\ell$, that is
	\begin{align}
	&\max~\epsilon(\mathbf{x}^{[\ell]},\delta^{[\ell]})  \nonumber
	\\
	\mathrm{s.t.}~~& \epsilon(\mathbf{x}^{[\ell]},\delta^{[\ell]})  \in \{\gamma_1^{[\ell]},\ldots,\gamma_{n^{[\ell]}}^{[\ell]}\} . \label{opt_5}
	\end{align}
	
	In summary, the maximum sensitivity of a single layer $\ell$ can be computed through solving optimizations (\ref{opt_2})--(\ref{opt_5}) sequentially.
	
	\begin{proposition}
		Given a single layer $\ell$, the maximum sensitivity $\epsilon(\mathbf{x}^{[\ell]},\delta^{[\ell]})$ is the solution of (\ref{opt_5}) in which $\gamma_1^{[\ell]},\ldots,\gamma_{n^{[\ell]}}^{[\ell]}$ are solutions of (\ref{opt_4}) with $\beta_{i,\min}^{[\ell]},~\beta_{i,\max}^{[\ell]}$ being solutions of  (\ref{opt_2}) and (\ref{opt_3}). 
	\end{proposition}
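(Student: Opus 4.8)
The plan is to establish the equivalence between the single-layer sensitivity problem (\ref{opt_1}) and the sequential chain (\ref{opt_2})--(\ref{opt_5}) by means of three reductions, exploiting the structure of the infinity norm together with the monotonicity of $f_\ell$ guaranteed by Assumption \ref{assumption_1}.

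First I would expand the objective of (\ref{opt_1}) componentwise. Since $\left\|\cdot\right\|=\left\|\cdot\right\|_{\infty}$, the quantity to be maximized is $\max_{1\le i\le n^{[\ell]}}\left|f_{\ell}(\beta_i^{[\ell]})-f_{\ell}((\boldsymbol{\omega}_i^{[\ell]})^{\top}\mathbf{x}^{[\ell]}+\theta_i^{[\ell]})\right|$, where $\beta_i^{[\ell]}$ denotes the perturbed pre-activation of neuron $i$. Because the feasible set $\{\Delta\mathbf{x}^{[\ell]}:\left\|\Delta\mathbf{x}^{[\ell]}\right\|_{\infty}\le\delta^{[\ell]}\}$ is common to all neurons, maximizing this pointwise maximum over a single disturbance is the same as taking the maximum over $i$ of the individual per-neuron maxima: only the worst neuron constrains the optimal $\Delta\mathbf{x}^{[\ell]}$. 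This max--max interchange is precisely the passage from (\ref{opt_1}) to the outer selection (\ref{opt_5}), and it leaves only the per-neuron inner problem of computing each $\gamma_i^{[\ell]}$.

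Next I would solve each per-neuron problem. For fixed $i$ the pre-activation $\beta_i^{[\ell]}=(\boldsymbol{\omega}_i^{[\ell]})^{\top}(\mathbf{x}^{[\ell]}+\Delta\mathbf{x}^{[\ell]})+\theta_i^{[\ell]}$ is an affine function of $\Delta\mathbf{x}^{[\ell]}$, so as $\Delta\mathbf{x}^{[\ell]}$ ranges over the box its attainable values sweep out a closed interval whose endpoints are exactly $\beta_{i,\min}^{[\ell]}$ and $\beta_{i,\max}^{[\ell]}$ returned by the convex programs (\ref{opt_2}) and (\ref{opt_3}). Since $\Delta\mathbf{x}^{[\ell]}=\mathbf{0}$ is feasible, the nominal pre-activation lies inside $[\beta_{i,\min}^{[\ell]},\beta_{i,\max}^{[\ell]}]$. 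By Assumption \ref{assumption_1} the map $\beta_i^{[\ell]}\mapsto f_{\ell}(\beta_i^{[\ell]})-f_{\ell}((\boldsymbol{\omega}_i^{[\ell]})^{\top}\mathbf{x}^{[\ell]}+\theta_i^{[\ell]})$ is monotonic on this interval, whence its absolute value attains its maximum at one of the two endpoints. This justifies replacing the continuum of feasible $\beta_i^{[\ell]}$ by the two-point set $\{\beta_{i,\min}^{[\ell]},\beta_{i,\max}^{[\ell]}\}$, which is exactly the constraint of (\ref{opt_4}); hence the solution of (\ref{opt_4}) coincides with the true per-neuron maximum $\gamma_i^{[\ell]}$.

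Finally I would assemble the pieces: substituting the correct per-neuron values $\gamma_i^{[\ell]}$ into (\ref{opt_5}) and invoking the max--max identity of the first step shows that the value returned by (\ref{opt_5}) equals the optimum of (\ref{opt_1}), i.e. $\epsilon(\mathbf{x}^{[\ell]},\delta^{[\ell]})$. I expect the main obstacle to be the rigorous justification of the two reductions that collapse continuous optimizations into finite ones: first, that the image of the box under the affine pre-activation map is the full interval $[\beta_{i,\min}^{[\ell]},\beta_{i,\max}^{[\ell]}]$ (a connectedness or intermediate-value argument), and second, that monotonicity of $f_{\ell}$ then forces the extremum of $\left|f_{\ell}(\beta_i^{[\ell]})-f_{\ell}(\cdot)\right|$ onto an endpoint. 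The max--max interchange and the closed-form solution of the two bounding programs are routine by comparison.
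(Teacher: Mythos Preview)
Your proposal is correct and follows the same logical route as the paper: the paper does not give a formal proof environment for this proposition but justifies it in the surrounding text by invoking the infinity-norm decomposition (your max--max interchange) and the monotonicity of $f_\ell$ to restrict attention to the two endpoints $\beta_{i,\min}^{[\ell]},\beta_{i,\max}^{[\ell]}$. Your write-up is simply a more careful articulation of that same argument, including the connectedness observation that the affine image of the box is the full interval, which the paper leaves implicit.
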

	
	The above optimizations (\ref{opt_2})--(\ref{opt_5}) provide a way to compute the maximum sensitivity  for one layer. Then, for an MLP, we have $\mathbf{x}^{[\ell]} = \mathbf{y}^{[\ell-1]}$, $\ell = 1,\ldots,L$, so the output of each layer can be computed by iterating above optimizations with updated input $\mathbf{x}^{[\ell]} = \mathbf{y}^{[\ell-1]}$,  $\delta^{[\ell]} = \epsilon(\mathbf{x}^{[\ell-1]},\delta^{[\ell-1]})$, $\ell = 1,\ldots,L$. The maximum sensitivity of   neural network $\mathbf{y}^{[L]} = F (\mathbf{x}^{[0]})$ is the outcome of optimization (\ref{opt_5}) for output layer $L$, namely, $\epsilon_F(\mathbf{x}^{[0]},\delta) = \epsilon(\mathbf{x}^{[L]},\delta^{[L]})$. The layer-by-layer idea is illustrated in Fig. \ref{fig_1}, which shows the general idea of the computation process for multiple layer neural networks.
	
	In conclusion, the computation for the maximal sensitivity of an MLP is converted to a chain of optimization problems. Furthermore, the optimization problems (\ref{opt_2}), (\ref{opt_3}) are convex optimization problems which can be efficiently solved by existing tools such as \texttt{cvx}, \texttt{linprog} in Matlab. To be more efficient in computation without evaluating the objective function repeatedly, we can even pre-generate the expression of optimal solutions given the weight and bias of the neural network.  Optimizations (\ref{opt_4}), (\ref{opt_5}) only have finite elements to search for the optimum, which can be also computed efficiently. The algorithm for computing the maximum sensitivity of an MLP is given in Algorithm \ref{algorithm_1}.  
	\begin{figure}
		\begin{center}
			\includegraphics[width=8cm]{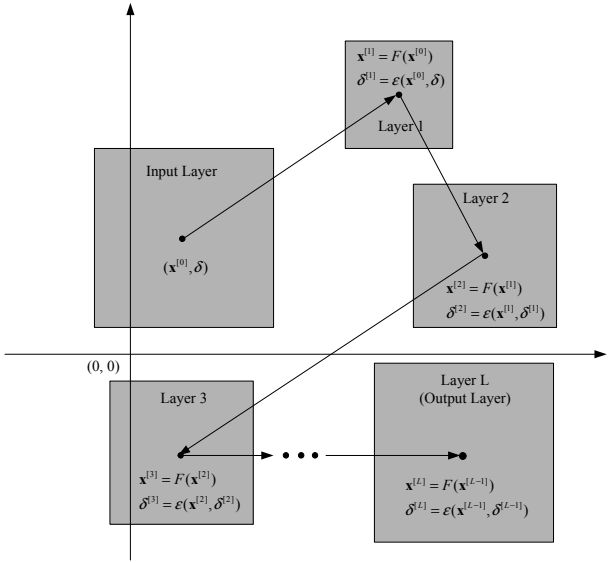}
			\caption{Illustration for computing maximum sensitivity for an MLP.}
			\label{fig_1}
		\end{center}
	\end{figure}
	
	\begin{algorithm}
		\caption{Maximum Sensitivity Computation Function for MLP} \label{algorithm_1}
		\begin{algorithmic}[1]
			\Require MLP $F$, input $\mathbf{x}^{[0]}$ and disturbance error $\delta$.
			\Ensure Maximum Sensitivity  $\epsilon_F(\mathbf{x}^{[0]},\delta) $. 
			\Function{MaxSensitivity}{$F$,~$\mathbf{x}^{[0]}$,~$\delta$}
			\State $\mathbf{x}^{[1]} \gets \mathbf{x}^{[0]}$; $\delta^{[1]}\gets\delta$
			\For {$\ell = 1: 1: L-1$}
			\State Solve (\ref{opt_2}), (\ref{opt_3}) to obtain $\beta_{i,\min}^{[\ell]}$, $\beta_{i,\max}^{[\ell]}$
			\State With $\beta_{i,\min}^{[\ell]}$, $\beta_{i,\max}^{[\ell]}$, solve  (\ref{opt_4}) to obtain $\gamma_i^{[\ell]}$
			\State With $\gamma_i^{[\ell]}$, solve (\ref{opt_5}) to obtain $\epsilon(\mathbf{x}^{[\ell]},\delta^{[\ell]})$
			\State  $\mathbf{x}^{[\ell+1]} \gets f_{\ell}(\mathbf{W}^{[\ell]}\mathbf{x}^{[\ell]}+\boldsymbol{\theta}^{[\ell]}) $;  $\delta^{[\ell+1]} \gets \epsilon(\mathbf{x}^{[\ell]},\delta^{[\ell]})$
			\EndFor
			\State With $\mathbf{x}^{[L]}$, $\delta^{[L]}$, solve  (\ref{opt_2})--(\ref{opt_5}) to obtain $\epsilon(\mathbf{x}^{[L]},\delta^{[L]})$
			\State $\epsilon_F(\mathbf{x}^{[0]},\delta) \gets \epsilon(\mathbf{x}^{[L]},\delta^{[L]})$
			
			\State \Return  $\epsilon_F(\mathbf{x}^{[0]},\delta)$
			\EndFunction
		\end{algorithmic}
	\end{algorithm}

	\section{Reachable Set Estimation and Verification}
	
	In previous section, the maximum sensitivity for an MLP can be computed via a chain of optimizations. The computation result actually can be viewed as a \emph{reachtube} for the inputs around nominal input $\mathbf{x}^{[0]}$, that are the inputs bounded in the tube $\left\|\Delta \mathbf{x}^{[0]}\right\|_{\infty}\le\delta$. This allows us to relate the individual simulation outputs to the output reachable set of an MLP. 
	
	First, the input space is discretized into lattices, which are described by
	\begin{equation}
	\mathcal{L}_i \triangleq \{\mathbf{x}^{[0]} \mid \left\|\mathbf{x}^{[0]}-\mathbf{x}_i^{[0]}\right\|_{\infty} \le \delta\}
	\end{equation}
	where $\mathbf{x}_i^{[0]}$ and $\delta$ are called the center and the radius of $\mathcal{L}_i$, respectively. The sets $\mathcal{L}_i$ satisfy $\mathcal{L}_i \cap \mathcal{L}_j = \{\mathbf{x}^{[0]} \mid \left\|\mathbf{x}^{[0]}-\mathbf{x}_i^{[0]}\right\|_{\infty} = \delta \wedge  \left\|\mathbf{x}^{[0]}-\mathbf{x}_j^{[0]}\right\|_{\infty} = \delta \}$ and $\bigcup_{i=1}^{\infty}{\mathcal{L}_i} =\mathbb{R}^{n \times n}$. Obviously, for any bounded set $\mathcal{X}$, we can find a finite number of $\mathcal{L}_i$ such that  $\mathcal{L}_i \bigcap \mathcal{X} \ne \emptyset$. The index set for  all $\mathcal{L}_i$ satisfying  $\mathcal{L}_i \cap \mathcal{X} \ne \emptyset$  is denoted by $\mathcal{I}$, so it can be obtained  that $\mathcal{X} \subseteq \bigcup_{i \in \mathcal{I}} \mathcal{L}_i$.  Explicitly, the lattices with a smaller radius $\delta$ are able to achieve a preciser approximation of bounded set $\mathcal{X}$ and, moreover, $\bigcup_{i \in \mathcal{I}} \mathcal{L}_i$ will exactly be $\mathcal{X}$ if radius $\delta \to 0$. The number of lattices is closely related to the dimension of the input space and radius chosen for discretization. Taking a unit box $\{\mathbf{x} \in \mathbb{R}^{n} \mid \left\|\mathbf{x}\right\| \le 1\}$ for example, the number of lattices with radius $\delta$ is 
	$
	{\lceil{1}/{2\delta} \rceil}^{n}
	$.  
	
	The first step is to derive all the lattices $\mathcal{L}_i$, $i \in \mathcal{I}$ for the input set $\mathcal{X}$ such that $\mathcal{L}_i \cap \mathcal{X} \ne \emptyset$, $\forall i \in \mathcal{I}$. 
	Then, based on the maximum sensitivity computation result, the output reachtube for each lattice $\mathcal{L}_i$ can be obtained by using Algorithm \ref{algorithm_1}.  
	Since $\mathcal{X} \subseteq \bigcup_{i \in \mathcal{I}} \mathcal{L}_i$, the union of output reachtubes of $\mathcal{L}_i$, $i \in \mathcal{I}$ includes all the possible outputs generated by the neural network with input set $\mathcal{X}$.  
	The following proposition is the main result in this work.
	
	\begin{proposition}\label{proposition_1}
		Given an MLP $\mathbf{y}^{[L]} = F(\mathbf{x}^{[0]})$, input set $\mathcal{X}$ and lattices $\mathcal{L}_i$, $i \in \mathcal{I}$ with centers $\mathbf{x}^{[0]}_i$ and radius $\delta$, and all the lattices satisfy $\mathcal{L}_i \cap \mathcal{X} \ne \emptyset$, $\forall i \in \mathcal{I}$, the output reachable set $\mathcal{Y}$ satisfies $\mathcal{Y} \subseteq \tilde{\mathcal{Y}} \triangleq \bigcup_{i \in \mathcal{I}} \tilde{\mathcal{Y}}_i$, where 
		\begin{equation}
		\tilde{\mathcal{Y}}_i \triangleq \{\mathbf{y}^{[L]}\mid \left\|\mathbf{y}^{[L]}-\mathbf{y}_i^{[L]}\right\|_{\infty} \le \epsilon_F(\mathbf{x}_i^{[0]},\delta),~\mathbf{y}_i^{[L]} = F(\mathbf{x}_i^{[0]})\}
		\end{equation} where $\epsilon_F(\mathbf{x}_i^{[0]},\delta)$ is computed by Algorithm \ref{algorithm_1}.
	\end{proposition}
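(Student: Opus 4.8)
The plan is to prove the set inclusion $\mathcal{Y} \subseteq \tilde{\mathcal{Y}}$ pointwise. First I would fix an arbitrary output $\mathbf{y}^{[L]} \in \mathcal{Y}$; by Definition \ref{reachable_set} there is some $\mathbf{x}^{[0]} \in \mathcal{X}$ with $\mathbf{y}^{[L]} = F(\mathbf{x}^{[0]})$. The goal then reduces to exhibiting a single index $i \in \mathcal{I}$ for which $\mathbf{y}^{[L]} \in \tilde{\mathcal{Y}}_i$, since membership in one reachtube immediately gives membership in the union $\tilde{\mathcal{Y}}$.

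The key step uses the covering property $\mathcal{X} \subseteq \bigcup_{i \in \mathcal{I}} \mathcal{L}_i$ established in the discretization construction. Because the lattices cover $\mathcal{X}$, the chosen input $\mathbf{x}^{[0]}$ lies in some $\mathcal{L}_i$, which by the definition of $\mathcal{L}_i$ means $\left\|\mathbf{x}^{[0]} - \mathbf{x}_i^{[0]}\right\|_{\infty} \le \delta$. Writing $\Delta\mathbf{x}^{[0]} = \mathbf{x}^{[0]} - \mathbf{x}_i^{[0]}$, this is exactly a disturbance of the nominal input $\mathbf{x}_i^{[0]}$ bounded by $\delta$, so Definition \ref{def4} applies directly: $\epsilon_F(\mathbf{x}_i^{[0]},\delta)$ bounds $\left\|\Delta\mathbf{y}^{[L]}\right\|_{\infty}$ for every such disturbance, whence $\left\|F(\mathbf{x}^{[0]}) - F(\mathbf{x}_i^{[0]})\right\|_{\infty} \le \epsilon_F(\mathbf{x}_i^{[0]},\delta)$. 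Setting $\mathbf{y}_i^{[L]} = F(\mathbf{x}_i^{[0]})$ yields $\left\|\mathbf{y}^{[L]} - \mathbf{y}_i^{[L]}\right\|_{\infty} \le \epsilon_F(\mathbf{x}_i^{[0]},\delta)$, which is precisely the membership condition $\mathbf{y}^{[L]} \in \tilde{\mathcal{Y}}_i$. Since $\mathbf{y}^{[L]}$ was arbitrary, $\mathcal{Y} \subseteq \tilde{\mathcal{Y}}$ follows.

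The main obstacle is not this containment argument, which is short, but the implicit reliance on the fact that the quantity returned by Algorithm \ref{algorithm_1} genuinely equals (or at least upper-bounds) the true maximum sensitivity $\epsilon_F(\mathbf{x}_i^{[0]},\delta)$ of Definition \ref{def4}. The algorithm computes the sensitivity layer-by-layer, propagating the output error bound $\delta^{[\ell+1]} = \epsilon(\mathbf{x}^{[\ell]},\delta^{[\ell]})$ as the input disturbance budget of the next layer. To make the argument fully rigorous one would need to verify that this composition is sound, i.e.\ that bounding each layer's output deviation in the infinity norm and feeding it forward never underestimates the end-to-end deviation. This is where Assumption \ref{assumption_1} (monotonicity) and the choice of the infinity norm are essential, and where the single-layer result of the preceding Proposition must be chained across all $L$ layers. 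For the purposes of this statement, however, I would simply invoke that Algorithm \ref{algorithm_1} returns $\epsilon_F(\mathbf{x}_i^{[0]},\delta)$ as asserted, so that the proof collapses to the pointwise inclusion above.
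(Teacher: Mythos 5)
Your proof is correct and takes essentially the same route as the paper's: both arguments rest on the covering property $\mathcal{X} \subseteq \bigcup_{i \in \mathcal{I}} \mathcal{L}_i$ combined with the per-lattice sensitivity bound, with your version merely unpacking the paper's set-level statement into a pointwise element chase. You are also right that the load-bearing step --- that the quantity returned by Algorithm \ref{algorithm_1} genuinely upper-bounds the maximum sensitivity of Definition \ref{def4} --- is invoked rather than proved, which is exactly what the paper's proof does as well.
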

	\begin{proof}
		Using Algorithm \ref{algorithm_1} for inputs within  lattice $\mathcal{L}_i$,  $\tilde{\mathcal{Y}}_i$ is the reachtube for $\mathcal{L}_i$ via the given MLP. Thus, the union of $\tilde{\mathcal{Y}}_i$, that is $\bigcup_{i \in \mathcal{I}} \tilde{\mathcal{Y}}_i$, is the output reachable set of $\bigcup_{i \in \mathcal{I}} \mathcal{L}_i$. Moreover, due to  $\mathcal{X} \subseteq \bigcup_{i \in \mathcal{I}} \mathcal{L}_i$, it directly implies that the output reachable set of $\mathcal{X} $ is a subset of $ \bigcup_{i \in \mathcal{I}} \tilde{\mathcal{Y}}_i$, that is 
		$
		\mathcal{Y} \subseteq \tilde{\mathcal{Y}} \triangleq \bigcup_{i \in \mathcal{I}} \tilde{\mathcal{Y}}_i
		$.
		The proof is complete.
	\end{proof}
	
	Based on Proposition \ref{proposition_1}, the output reachable set estimation involves the following two key steps: 
	\begin{enumerate}
		\item[(1)] Execute a finite number of simulations for MLP to get individual outputs $\mathbf{y}_i^{[L]}$ with respect to individual inputs $\mathbf{x}_i^{[0]}$. This can be done by simply generating the outputs with a finite number of inputs through the MLP as  $\mathbf{y}^{[L]}_i = F(\mathbf{x}^{[0]}_i)$. That is the main reason that our approach is called simulation-based.
		\item[(2)] Compute the maximum sensitivity for a finite number of lattices centered at $\mathbf{x}_i^{[0]}$, which can be solved by the \texttt{MaxSensitivity} function proposed in Algorithm \ref{algorithm_1}. This step is to produce the reachtubes based on the simulation results, and combine them for the reachable set estimation of outputs.
	\end{enumerate}
	
	The complete algorithm to perform the output reachable set estimation for an MLP is summarized in Algorithm \ref{algorithm_2}, and Example \ref{example_1} is provided to validate our approach.
	
	\begin{algorithm}
		\caption{Output Reachable Set Estimation for MLP} \label{algorithm_2}
		\begin{algorithmic}[1]
			\Require MLF $F$, input set $\mathcal{X}$.
			\Ensure Estimated output set $\tilde{\mathcal{Y}}$. 
			\Function{OutputReach}{$F$,~$\mathcal{X}$}
			
			\State Initialize $\mathcal{L}_i$, $i \in \mathcal{I}$, $\mathbf{x}_i^{[0]}$, $\delta$; $\tilde{\mathcal{Y}} \gets \emptyset$
			
			\For {$\ell = 1: 1: \left| \mathcal{I} \right|$}
			
			\State  $\mathbf{y}^{[L]}_i \gets F(\mathbf{x}^{[0]}_i)$
			
			\State  $\epsilon_F(\mathbf{x}_i^{[0]},\delta) \gets $ \textsc{MaxSensitivity}$(F,\mathbf{x}_i^{[0]},\delta)$
			
			\State   $\tilde{\mathcal{Y}}_i \gets \{\mathbf{y}^{[L]}\mid \left\|\mathbf{y}^{[L]}-\mathbf{y}_i^{[L]}\right\|_{\infty} \le \epsilon_F(\mathbf{x}_i^{[0]},\delta)\}$
			
			\State $\tilde{\mathcal{Y}} \gets \tilde{\mathcal{Y}} \cup \tilde{\mathcal{Y}}_i$
			\EndFor
			
			\State 	\Return $\tilde{\mathcal{Y}}$
			
			\EndFunction
		\end{algorithmic}
		
	\end{algorithm} 
	
	\begin{example}\label{example_1}
		A neural network with 2 inputs, 2 outputs and 1 hidden layer consisting of 5 neurons is considered. The activation function for the hidden layer is choosen as \texttt{tanh} function and \texttt{purelin} function is for the output layer. The weight matrices and bias vectors are randomly generated as below: 
		\begin{align*}
		&\mathbf{W}^{[1]}=\left[ {\begin{array}{*{20}c}
			-0.9507 &   -0.7680  \\
			0.9707   &   0.0270   \\
			-0.6876  &   -0.0626   \\
			0.4301   &   0.1724  \\
			0.7408  &  -0.7948   \\
			\end{array} } \right],~\boldsymbol{\theta}^{[1]}=\left[ {\begin{array}{*{20}c}
			1.1836 \\
			-0.9087 \\
			-0.3463 \\
			0.2626 \\
			-0.6768 \\
			\end{array} } \right]
		\\
		&\mathbf{W}^{[2]}=\left[ {\begin{array}{*{20}c}
			0.8280 &   0.6839  &  1.0645 &  -0.0302  &  1.7372 \\
			1.4436  &  0.0824 &  0.8721  &  0.1490 &  -1.9154 \\
			\end{array} } \right]
		\\
		&\boldsymbol{\theta}^{[2]}=\left[ {\begin{array}{*{20}c}
			-1.4048 \\
			-0.4827 \\
			\end{array} } \right] .
		\end{align*}  
		
		The input set is considered as 	$\mathcal{X}_1=\{[x_1~x_2]^{\top} \mid \left|x_1-0.5\right| \le 0.5 \wedge \left|x_2-0.5\right| \le 0.5\}$. In order to execute function \texttt{OutputReach} described in Algorithm \ref{algorithm_2}, the first step is to initialize the lattices $\mathcal{L}_i$ with centers $\mathbf{x}_i^{[0]}$ and radius $\delta$. In this example, the radius is chosen to be $0.1$ and $25$ lattices are generated as in  Fig. \ref{lattice} shown in gray, which means there are in total $25$ simulations to be executed for the output reachable set estimation. 
		
		\begin{figure}
			\begin{center}
				\includegraphics[width=5cm]{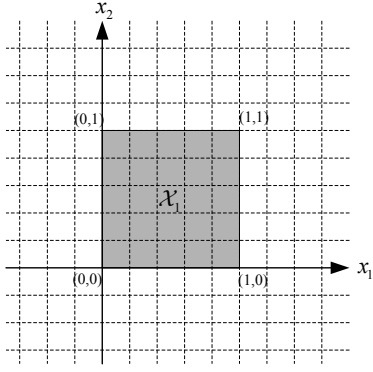}
				\caption{Input sets $\mathcal{X}_1$ and $25$ lattices with radius of $\delta=0.1$.}
				\label{lattice}
			\end{center}
		\end{figure}
		
		Executing function \texttt{OutputReach} for $\mathcal{X}_1$, the estimated output reachable set is given in Fig. \ref{reach_1}, in which 25 reachtubes are obtained and the union of them is an over-approximation of reachable set $\mathcal{Y}$. To validate the result, $10000$ random outputs are generated, it is clear to see that all the outputs are included in the estimated reachable set, showing the effectiveness of the proposed approach. 
		
		\begin{figure}
			\begin{center}
				\includegraphics[width=9cm]{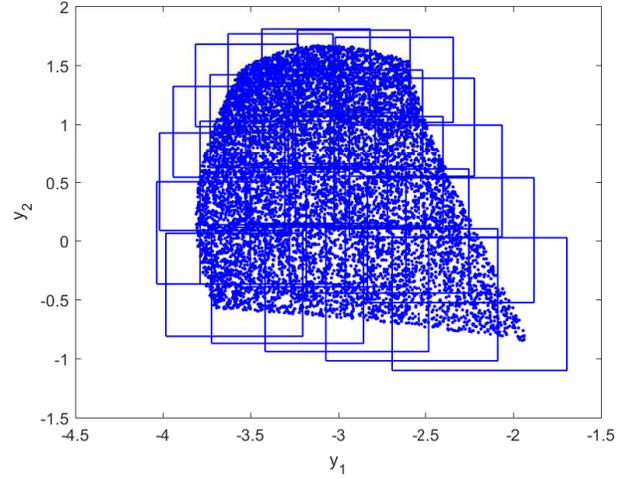}
				\caption{Output reachable set estimation with input set $\mathcal{X}_1$ and $\delta=0.1$. 25 reachtubes are computed and $10000$  random outputs are all included in the estimated reachable set. }
				\label{reach_1}
			\end{center}
		\end{figure}
		
		Moreover, we choose different radius for discretizing state space to show how the choice of radius affects the estimation outcome. 
		As  mentioned before,  a smaller radius implies a tighter approximation of input sets and is supposed to achieve a preciser estimation. Here, we select the radius as $\delta \in \{0.1,0.05,0.025,0.0125\}$. With finer discretizations, more simulations are required for running function \texttt{OutputReach}, but  tighter estimations for the output reachable set can be obtained. The output reachable set estimations are shown in Fig. \ref{reach_2}. Comparing those results, it can be observed that a smaller radius can lead to a better estimation result at the expense of more simulations and computation time, as shown in Table \ref{tab_2}. 
		\begin{figure}
			\begin{center}
				\includegraphics[width=9cm]{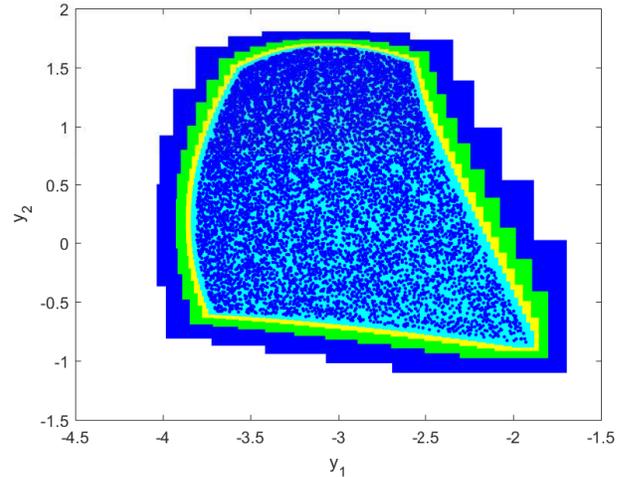}
				\caption{Output reachable set estimations with input set $\mathcal{X}_1$ and $\delta=0.1 (\mathrm{blue}), 0.05 (\mathrm{green}), 0.025 (\mathrm{yellow}),0.0125 (\mathrm{cyan})$. Tighter estimations are obtained with smaller radius. }
				\label{reach_2}
			\end{center}
		\end{figure}
		
		\begin{table}[t!]
			\centering
			\caption{Comparison of output reachable set estimations with different radius}\label{tab_2}
			\begin{tabular}{c||c||c||c||c}
				\hline
				Radius & $0.1$ & $0.05$ & $0.025$ & $0.0125$ \\
				\hline
				Computation time  & $0.044$ s & $0.053$ s &  $0.086$ s & $0.251$ s \\
				\hline
				Simulations & 25 & 100 & 400 & 1600  \\
				\hline
			\end{tabular}
		\end{table}
	\end{example}
	
	Algorithm \ref{algorithm_2} is sufficient to solve the output reachable set estimation problem for an MLP, that is Problem \ref{problem1}. Then, we can move forward to Problem \ref{problem2}, the safety verification problem for an MLP with a given safety specification $\mathcal{S}$. 
	
	\begin{proposition}
		Consider an MLP in the form of (\ref{NN}), an output reachable set estimation and a safety specification $\mathcal{S}$, the MLP is safe if  $\tilde{\mathcal{Y}} \cap \neg \mathcal{S} = \emptyset$, where $\tilde{\mathcal{Y}}$ is the estimated output reachable set obtained by Algorithm \ref{algorithm_2}. 
	\end{proposition}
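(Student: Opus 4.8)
The plan is to recognize this proposition as an immediate consequence of two results already established earlier in the paper, namely Proposition~\ref{proposition_1} and Lemma~\ref{lemma1}. The whole argument is a chaining of set inclusions, so no new computation is required; the work is entirely in verifying that the hypotheses of Lemma~\ref{lemma1} are met by the particular over-approximation that Algorithm~\ref{algorithm_2} returns.

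First I would invoke Proposition~\ref{proposition_1} to certify that the set $\tilde{\mathcal{Y}}$ produced by Algorithm~\ref{algorithm_2} is genuinely an over-approximation of the true output reachable set. Concretely, Algorithm~\ref{algorithm_2} constructs the lattices $\mathcal{L}_i$, $i \in \mathcal{I}$, covering the input set ($\mathcal{X} \subseteq \bigcup_{i \in \mathcal{I}} \mathcal{L}_i$), computes each reachtube $\tilde{\mathcal{Y}}_i$ via the \textsc{MaxSensitivity} function, and sets $\tilde{\mathcal{Y}} = \bigcup_{i \in \mathcal{I}} \tilde{\mathcal{Y}}_i$. Proposition~\ref{proposition_1} then yields exactly $\mathcal{Y} \subseteq \tilde{\mathcal{Y}}$. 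This is the step that makes the estimated set usable for verification, and it is where all the earlier machinery (the maximum-sensitivity bounds, the monotonicity assumption, the discretization) is silently being used.

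With the inclusion $\mathcal{Y} \subseteq \tilde{\mathcal{Y}}$ in hand, I would simply apply Lemma~\ref{lemma1}: that lemma states that for any output reachable set estimation satisfying $\mathcal{Y} \subseteq \tilde{\mathcal{Y}}$, the condition $\tilde{\mathcal{Y}} \cap \neg \mathcal{S} = \emptyset$ implies the MLP is safe. Since the hypothesis $\tilde{\mathcal{Y}} \cap \neg \mathcal{S} = \emptyset$ is assumed in the present statement and the inclusion has just been verified, Lemma~\ref{lemma1} closes the argument directly, giving $\mathcal{Y} \cap \neg \mathcal{S} = \emptyset$ and hence safety by the defining condition~(\ref{safety}).

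There is no genuine obstacle here beyond bookkeeping: the only thing that could go wrong is a mismatch between the $\tilde{\mathcal{Y}}$ of Lemma~\ref{lemma1} (an abstract over-approximation) and the $\tilde{\mathcal{Y}}$ returned by Algorithm~\ref{algorithm_2} (a concrete union of reachtubes). The ``hard part,'' such as it is, is therefore just making explicit that Proposition~\ref{proposition_1} guarantees the algorithmic output satisfies the inclusion hypothesis that Lemma~\ref{lemma1} requires, after which the two results compose with a single line. I would keep the written proof to two or three sentences accordingly.
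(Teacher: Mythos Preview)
Your proposal is correct and matches the paper's own proof essentially line for line: the paper simply notes that Algorithm~\ref{algorithm_2} (via Proposition~\ref{proposition_1}) yields $\mathcal{Y} \subseteq \tilde{\mathcal{Y}}$ and then invokes Lemma~\ref{lemma1} to conclude safety. Your slightly more explicit citation of Proposition~\ref{proposition_1} as the justification for the inclusion is if anything an improvement in clarity, but the argument is identical.
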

	\begin{proof}
		By Algorithm \ref{algorithm_2}, we have $ \mathcal{Y} \subseteq \tilde{\mathcal{Y}} $, where $\mathcal{Y}$ is the actual output reachable set of the MLP. Using Lemma \ref{lemma1}, the safety can be guaranteed. The proof is complete.
	\end{proof}
	
	The simulation-based safety verification algorithm is presented in Algorithm \ref{algorithm_3}. 
	\begin{algorithm}
		
		\caption{Safety Verification for MLP} \label{algorithm_3}
		
		\begin{algorithmic}[1]
			\Require MLP $F$, input set $\mathcal{X}$, safety requirement $\mathcal{S}$.
			\Ensure Safe or unsafe property. 
			\Function{SafetyVeri}{$F$,~$\mathcal{X}$,~$\mathcal{S}$}
			
			\State Initialize $\mathcal{L}_i$, $i \in \mathcal{I}$, $\mathbf{x}_i^{[0]}$, $\delta$
			
			\For {$\ell = 1: 1: \left| \mathcal{I} \right|$}
			
			\State  $\mathbf{y}^{[L]}_i \gets F(\mathbf{x}^{[0]}_i)$
			\If {$\mathbf{y}_i^{[L]} \cap  \neg \mathcal{S} \ne \emptyset$ and $\mathbf{x}_i^{[0]} \in \mathcal{X}$}
			
			\State \Return UNSAFE
			
			\EndIf
			
			\EndFor
			
			\State  $\tilde{\mathcal{Y}} \gets$ \textsc{OutputReach}($F$, $\mathcal{X}$)
			
			\If {$\tilde{\mathcal{Y}} \cap \neg \mathcal{S} = \emptyset$}
			\State \Return SAFE
			\Else
			\State \Return UNCERTAIN
			\EndIf		
			\EndFunction
		\end{algorithmic}
		
	\end{algorithm}

	\begin{remark}The Algorithm \ref{algorithm_3} is sound for the cases of SAFE and UNSAFE, that is, if it returns SAFE then the system is safe; when it returns UNSAFE there exists at least one output from input set is unsafe since the existence of one simulation that is unsafe is sufficient to claim unsafeness. Additionally, if it returns UNCERTAIN, caused by the fact $\tilde{\mathcal{Y}} \cap \neg \mathcal{S} \ne \emptyset$, that means the safety property is unclear for this case. 
	\end{remark}
	\begin{example}
		The same MLP as in Example \ref{example_1} is considered, and the input set is considered to be 
		$\mathcal{X}_2=\{[x_1~x_2]^{\top} \mid \left|x_1-0.5\right| \le 1.5 \wedge \left|x_2-0.5\right| \le 0.1\}$.
		Furthermore, the safe specification $\mathcal{S}$ is assumed as 
		$\mathcal{S} = \{[x_1~x_2]^{\top} \mid -3.7\le x_1 \le -1.5\}$.
		To do  safety verification, the first step of using function \texttt{SafetyVeri} in Algorithm \ref{algorithm_3} is to initialize the lattices $\mathcal{L}_i$ with two radius $\delta_1=0.1$ and  $\delta_2=0.05$.
		\begin{figure}
			\begin{center}
				\includegraphics[width=9cm]{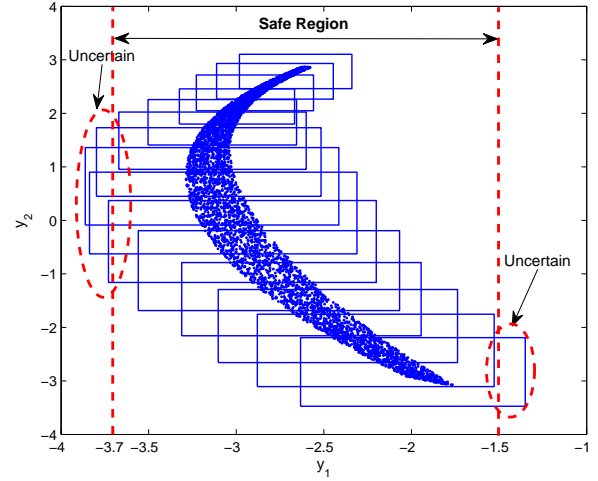}
				\caption{Safety verification for input belonging to $\mathcal{X}_2$. With radius $\delta=0.1$, the MLP cannot be concluded to be safe or not, since there exist intersections between the estimated reachable set and the unsafe region.}
				\label{veri_1}
			\end{center}
		\end{figure}
		\begin{figure}
			\begin{center}
				\includegraphics[width=9cm]{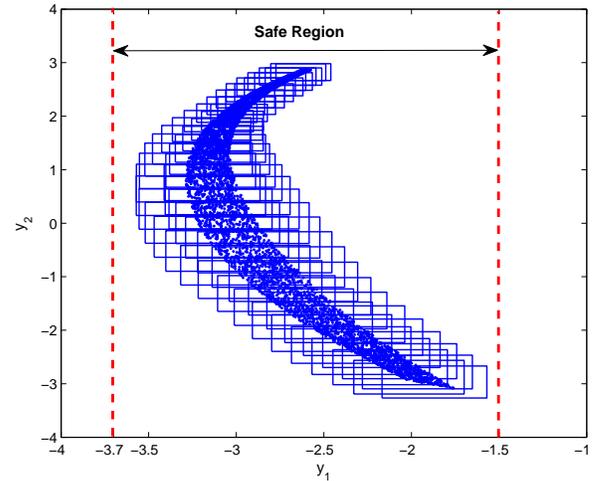}
				\caption{Safety verification for input belonging to $\mathcal{X}_2$. With  $\delta=0.05$, the safety can be confirmed, since the estimated reachable set is in the safe region.}
				\label{veri_2}
			\end{center}
		\end{figure}
		Since two radius are used, the verification results could be different due to different precisions selected. The verification results are shown in Figs. \ref{veri_1} and \ref{veri_2}, and compared in Table \ref{tab_3}.
		
		\begin{table}[ht!]
			\centering
			\caption{Comparison of safety verifications with different radius}\label{tab_3}
			\begin{tabular}{c||c||c}
				\hline
				Radius & $\delta_1=0.1$ & $\delta_2=0.05$\\
				\hline
				Safety & UNCERTAIN & SAFE   \\
				\hline
				Simulations & 15 & 60  \\
				\hline
				
			\end{tabular}
		\end{table} 
		The safety property is uncertain when the radius is chosen as $0.1$. However, we can conclude the safeness of the MLP when a smaller radius $\delta =0.05$ is chosen at the expense of increasing the number of simulations from $15$ to $60$. 
	\end{example}
	
	\section{Application in Safety Verification for Robotic Arm Models} 
	In this section, our study focuses
	on \emph{learning forward kinematics} of a robotic arm model with two joints, see Fig. \ref{robotic_arm}. The learning task of the MLP is to predict the position $(x,y)$
	of the end with knowing the joint angles $(\theta_1,\theta_2)$. 
	For the robotic arm, the input space $[0,2\pi]\times [0,2\pi]$ for $(\theta_1,\theta_2)$  is classified into three zones for its operations: 
	\begin{figure}[h!]
		\begin{center}
			\includegraphics[width=4cm]{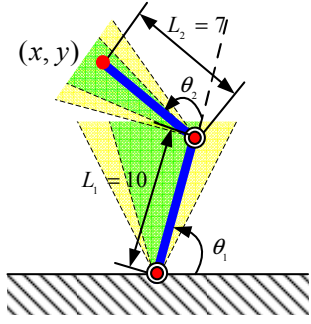}
			\caption{Robotic arm with two joints. The normal working zone of $(\theta_1,\theta_2)$ is colored in green and the buffering zone is in yellow.}
			\label{robotic_arm}
		\end{center}
	\end{figure}
	\begin{enumerate}
		\item [(1)] \emph{Normal working zone:} The normal working zone is the working region that the robotic arm works most of the time in, and the input-output training data is all collected from this region to train the MLP model. This zone is assumed to be   $\theta_1,\theta_2 \in [\frac{5\pi}{12},\frac{7\pi}{12}]$. 
		
		\item[(2)] \emph{Forbidden zone:} The forbidden zone specifies the region that the robotic arm will never operate in due to physical constraints or safety considerations in design. This zone is assumed as $\theta_1,\theta_2 \in [0,\frac{\pi}{3}] \cup [\frac{2\pi}{3},2\pi] $.
		
		\item[(3)] \emph{Buffering zone:} The buffering zone is between the normal working zone and the forbidden zone. Some occasional operations may occur out of normal working zone, but it remains in the buffering zone, not reaching the forbidden zone. This zone is   $\theta_1,\theta_2 \in [\frac{\pi}{3},\frac{5\pi}{12}] \cup [\frac{7\pi}{12},\frac{2\pi}{3}] $.
	\end{enumerate}
	
	The safety specification for the position $(x,y)$ is considered as 
	$\mathcal{S}=\{(x,y)\mid -14 \le x\le 3 \wedge 1 \le y \le 17\}$.
	In the safety point of view, the MLP needs to be verified that all the outputs produced by the inputs in the normal working zone and buffering zone will satisfy  safety specification $\mathcal{S}$. One point needs to emphasize is that the MLP is trained by the data in normal working zone, but the safety specification is defined on both normal working zone and buffering zone. 
	
	Using the data from normal working zone, the learning process is standard by using \texttt{train} function in the neural network toolbox in Matlab. The MLP considered for this example is with 2 inputs, 2 outputs and 1 hidden layer consisting of 5 neurons. The activation functions \texttt{tanh} and \texttt{purelin} are for hidden layer and output layer, respectively.  However, for the trained MLP, there is no safety assurance for any  manipulations, especially for the ones in the buffering zone where no input-output data is used to train the MLP. To verify the safety specification of the trained MLP, our function \texttt{SafetyVeri} presented in Algorithm \ref{algorithm_3} is used for this example. 
	
	First, we train the MLP with inputs $(\theta_1,\theta_2) \in [\frac{5\pi}{12},\frac{7\pi}{12}] \times [\frac{5\pi}{12},\frac{7\pi}{12}]$ along with their corresponding  outputs. Then, to use function \texttt{SafetyVeri} for the inputs in both normal working zone and buffering zone, we discretize input space $[\frac{\pi}{3},\frac{2\pi}{3}] \times [\frac{\pi}{3},\frac{2\pi}{3}]$ with radius $\delta=0.05$. The safety verification result is shown in Fig. \ref{robotic_veri_0.05}. It can be observed that the safety property of the MLP is uncertain since the estimated reachable set reaches out of the safe region $\mathcal{S}$. Then, 5000 random simulations are executed, and it shows that no output is unsafe. However, 5000 simulations or any finite number of simulations are not sufficient to say the MLP is safe. Therefore, to soundly claim that the MLP trained with the data collected in normal working zone is safe with regard to both normal working and buffering zones, a smaller radius $\delta =0.02$ has to be adopted. The verification result with $\delta =0.02$ is shown in Fig. \ref{robotic_veri_0.02}. It can be seen that the reachable set of the MLP is contained in the safe region, which is sufficient to claim the safety of the robotic arm MLP model. 
	
	\begin{figure}
		\begin{center}
			\includegraphics[width=9cm]{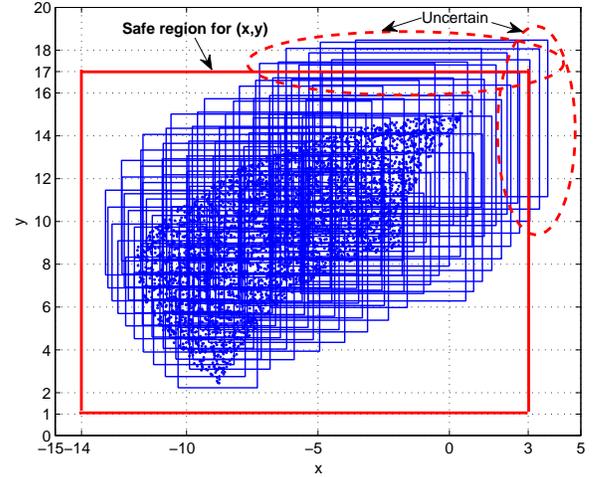}
			\caption{Safety verification for MLP model of robotic arm with two joints. With radius $\delta=0.05$, the safety cannot be determined.}
			\label{robotic_veri_0.05}
		\end{center}
	\end{figure}
	\begin{figure}
		\begin{center}
			\includegraphics[width=9cm]{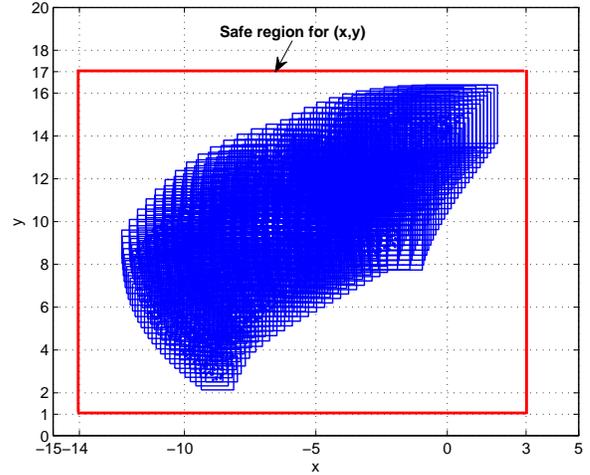}
			\caption{Safety verification for MLP model of robotic arm with two joints. With radius $\delta=0.02$, the MLP model can be claimed to be safe.}
			\label{robotic_veri_0.02}
		\end{center}
	\end{figure}
	
	\section{Related Work}
	In \cite{katz2017reluplex}, an SMT solver named Reluplex is proposed for a special class of neural networks with ReLU activation functions. The Reluplex extends the well-known Simplex algorithm from linear functions to ReLU functions by making use of the piecewise linear feature of ReLU functions. In contrast to Reluplex solver in the context of SMT, the approach developed in our paper aims at the reachability problem of neural networks. In \cite{xiang2017reachable_arxiv}, A layer-by-layer approach is developed for the output reachable set computation of ReLU neural networks. The computation is formulated in the form of a set of manipulations for a union of polytopes.  It should be noted that our approach is general in the sense that it is not tailored for a specific activation function.
	
	The authors of  \cite{pulina2012challenging} and \cite{pulina2010abstraction} propose an approach for verifying properties of neural networks with sigmoid activation functions. They replace the activation functions with piecewise linear approximations thereof, and then invoke black-box SMT
	solvers. Our approach does not use any approximations of activation functions. Instead, the approximation of our approach comes from the over-approximation of output set of each neuron, by lower and upper bounds.
	
	In a recent paper \cite{huang2017safety}, the authors propose a method for proving the local adversarial robustness of neural networks. The purpose of paper \cite{huang2017safety} is to check the robustness around one fixed point. Instead, our approach is focusing on a set defined on a continuous domain, rather than one single point.  
	
	Finally, Lyapunov function approach plays a key role in stability and reachability analysis for dynamical systems such as uncertain systems \cite{xiang2018parameter}, positive systems \cite{xiang2017stability}, hybrid systems \cite{xiang2016necessary,xiang2017output,xiang2017reachable}. The neural networks involved in papers \cite{xu2017reachable,zuo2014non,thuan2016reachable} are recurrent neural networks modeled by a family of differential equations so that Lyapunov function approach works. For the MLP considered in this paper which is described  by a set of nonlinear algebraic equations, Lyapunov function approach is not a suitable tool. Thus, we introduce another conception called maximal sensitivity to characterize the reachability property of neural networks.
	
	\section{Conclusions}
	This paper represents a simulation-based method to compute the output reachable sets for MLPs by solving a chain of optimization problems. Based on the monotonic assumption which can be satisfied by a variety of activation functions of neural networks, the computation for the so-called maximum sensitivity is formulated as a set of optimization problems essentially described as convex optimizations. Then, utilizing the results for maximum sensitivity, the reachable set estimation of an MLP can be performed by checking the maximum sensitivity property of finite number of sampled inputs to the MLP. Finally, the safety property of an MLP can be verified based on the estimation of output reachable set. Numerical examples are provided to validate the proposed algorithms. Finally, an application of safety verification for a robotic arm model with two joints is presented.
	
	\bibliographystyle{ieeetr}
	\bibliography{ref}
\end{document}